\documentclass[conference]{IEEEtran} 

\IEEEoverridecommandlockouts                              

\usepackage{cite}
\usepackage{amsmath,amssymb,amsfonts}
\usepackage{algorithmic}
\usepackage{graphicx}
\usepackage{textcomp}
\usepackage{xcolor}
\usepackage{siunitx} 
\usepackage{amsthm}
\usepackage{mathtools}
\usepackage{cleveref}
\usepackage{placeins}
\usepackage{dblfloatfix}
\usepackage[acronym]{glossaries}
\usepackage{tikz}
\usepackage[normalem]{ulem}

\graphicspath{{./pictures/}}

\newcommand{\T}[1]{\boldsymbol{{#1}}}
\newcommand{\inv}[1]{#1^{-1}}

\newcommand{\q}{\T{q}}
\newcommand{\dq}{\dot{\T{q}}}
\newcommand{\ddq}{\ddot{\T{q}}}

\newcommand{\M}{\T{M}}

\newcommand{\Real}{\mathbb{R}}

\newcommand{\qerr}{\T{e}_q}
\newcommand{\dqerr}{\dot{\T{e}}_q}

\newcommand{\dqr}{\dot{\mathbf{q}}_r}

\newcommand{\etheta}{\T{e}_\theta}

\newcommand{\ql}{\q}

\newcommand{\qm}{\T \theta}

\newcommand{\ddqm}{\ddot{\T \theta}}
\newcommand{\ctrlinput}{\T u}

\newcommand{\springtorque}{\T \psi}

\newcommand{\klipschitzmax}{k_{\psi}}
\newcommand{\ksmin}{k_{s,min}}
\newcommand{\Ks}{\T K_s}
\newcommand{\taue}{\T \tau_{e}}
\newcommand{\abs}[1]{{\vert\vert #1 \vert \vert}_2}

\newcommand{\khat}{\hat{\T k}}
\newcommand{\kerr}{\T{e}_k}
\newcommand{\dkerr}{\dot{\T{e}}_k}
\newcommand{\Ltwo}{\mathcal{L}_2}

\newtheorem{theorem}{Theorem}

\newtheorem{assumption}{Assumption}

\setacronymstyle{long-short}
\newacronym{CoM}{CoM}{center of mass}

\usepackage{setspace}
\definecolor{sidebox_bg}{rgb}{1,1,0.4}

\title{\vspace{8mm} Adaptive Control of Motor-Position-Controlled Flexible Joint Robots with Uncertain Joint Stiffness
}

\author{\IEEEauthorblockN{Annika Kirner}
	\IEEEauthorblockA{\textit{Automation and Control Institute} \\
		\textit{TU Wien}\\
		Vienna, Austria \\
		kirner@acin.tuwien.ac.at}
	\and
		\IEEEauthorblockN{Grazia Zambella}
	\IEEEauthorblockA{\textit{Automation and Control Institute} \\
		\textit{TU Wien}\\
		Vienna, Austria \\
		zambella@acin.tuwien.ac.at}
	\and
		\IEEEauthorblockN{Igor Kova\v{c}evi\'{c}}
	\IEEEauthorblockA{\textit{Automation and Control Institute} \\
		\textit{TU Wien}\\
		Vienna, Austria}
		\and
	\IEEEauthorblockN{Hannes H\"oppner}
	\IEEEauthorblockA{\textit{Soft Interactive Robotics Laboratory} \\
		\textit{Berliner Hochschule f\"ur Technik}\\
		Berlin, Germany \\
		hannes.hoeppner@bht-berlin.de}
	\and
	\IEEEauthorblockN{Jee-Hwan Ryu}
	\IEEEauthorblockA{\textit{Dept. of Civil and Environmental Engineering} \\
		\textit{Korea Advanced Institute of Science \& Technology}\\
		Seoul, South Korea \\
		jhryu@kaist.ac.kr}

	\and
	\IEEEauthorblockN{Christian Ott$^1$}
	\IEEEauthorblockA{\textit{Automation and Control Institute} \\
		\textit{TU Wien}\\
		Vienna, Austria \\
		christian.ott@tuwien.ac.at}
	
	\thanks{This work was supported in part by the European Research Council (ERC) through the European Union’s Horizon Europe research and innovation programme under Grant 101248099 (CORIM).}
	
	\thanks{$^1$Christian Ott is also with the Institute of Robotics and Mechatronics, German Aerospace Center (DLR), Wessling, Germany.}
}

\begin{document}

\maketitle
\thispagestyle{empty}
\pagestyle{empty}

\begin{abstract}
Model-based control of flexible joint robots with position-controlled actuators relies on accurate knowledge of the joint compliance. In practice, precise stiffness models are often unavailable as the properties of physical elastic elements vary with operating conditions and slowly change over time due to wear and aging. To improve model-based control of these systems, we propose an adaptive control approach in this work, which updates an estimate of the uncertain, nonlinear torque-deflection relation of each joint. As opposed to classical adaptive control approaches for non-elastic robots, we rely on an implicit control law and a control-input-dependent regressor matrix to account for the uncertain joint stiffness. 
We analyze robustness of the approach against errors induced by the motor position controller.
Experimental results on a flexible joint with nonlinear stiffness characteristics demonstrate the effectiveness of the proposed approach.

\end{abstract}


\section{Introduction}
Common model-based control approaches for flexible joint robots require accurate models of the joint stiffness \cite{DeLuca2008, Spong1987}. In practice, however, the torque-deflection characteristics of physical elastic elements are subject to gradual variations due to effects such as material aging, temperature changes, and load-dependent hysteresis. Explicitly modeling these effects is challenging and often impractical. As a result, torque-deflection models identified from calibration measurements may not remain accurate over time or across operating conditions.

This issue can degrade the achievable control performance of flexible joint robots, also if their elastic actuators employ servo motors. Examples for such actuators are the variable stiffness actuators (VSA) introduced in \cite{Catalano2011}. They have been designed to be low-cost and easily deployable while being inherently compliant. The actuators \cite{Catalano2011} have been used extensively in test benches for validating control strategies \cite{DellaSantina2017, Moyron2025, Pierallini2020}. Further, they have been successfully incorporated into more complex robotic platforms such as the wheeled humanoid robot Alter-Ego \cite{Lentini2019} demonstrating their potential for being used in daily tasks involving physical interaction, such as opening a door \cite{Zambella2025a}.

\begin{figure}[tb]
	\centering
	\includegraphics[width=0.49\textwidth]{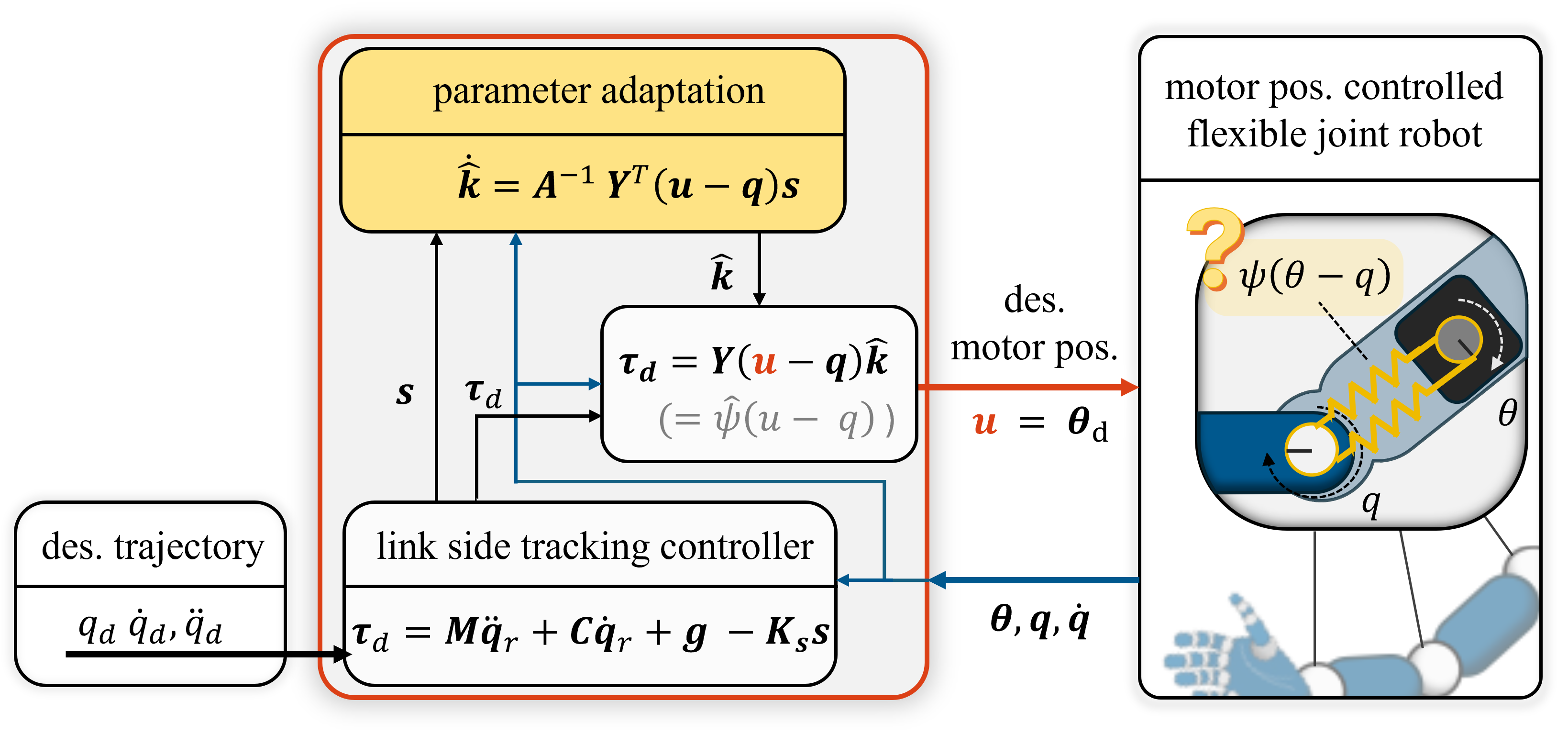}
	\caption{Blockdiagram of the proposed adaptive link-side tracking controller for motor position controlled flexible joint robots. An estimate of the uncertain, nonlinear torque-deflection relation $\springtorque(\T\theta- \q)$ is adapted online such that the link-side tracking errors converge.}
	\label{fig:intro}
\end{figure}

Flexible joint robots equipped with servo motors provide a motor-position interface instead of a torque interface. Model-based link-side control approaches for these robots usually determine the desired motor position to be commanded from the desired torque to be injected by inverting the torque-deflection model \cite{Moyron2025, Lentini2019}. Consequently, the accuracy of the torque generation, and thus the overall control performance, strongly depends on the fidelity of the utilized model. This motivates the development of control strategies for motor-position-controlled flexible joint robots that enable accurate link-side control despite uncertainties in the joint stiffness characteristics.

A common way to address uncertain system parameters is adaptive control. Adaptive controllers have been widely studied and successfully used to compensate for parametric uncertainties in inertial parameter by adjusting estimates online \cite{Slotine1987, Slotine1989, Lee2018}. Extensions of these ideas to flexible-joint robots with uncertain inertial parameters have also been proposed \cite{Spong1989, Ghorbel1989, Lozano-Leal1991}. Besides imprecise knowledge of inertial parameters, also uncertain stiffness is treated in \cite{Tian1995}. However, these approaches typically assume a linear torque-deflection relation and access to a motor-torque interface.

Besides classical adaptive controllers, neural adaptive controllers have also been proposed for flexible-joint robots \cite{Chen2020}. In particular, \cite{Chen2020} addresses motor-position controlled systems with uncertain inertial parameters. However, the joints are assumed to have a constant, known stiffness and an ideal motor-position controller.

\subsection{Contribution}
This work is motivated by improving model-based control of motor-position controlled flexible joint robots with uncertain, nonlinear stiffness. In particular: 
\begin{itemize}
	\item We propose an adaptive control framework (cf. Fig.~\ref{fig:intro}) adjusting an estimate of the uncertain, nonlinear torque-deflection map online. The design applies the Lyapunov-based methodology of \cite{Slotine1987}. Different to \cite{Slotine1987}, our approach relies on a control-input-dependent regressor and an implicit control law to handle the uncertain, nonlinear joint stiffness.
	\item We formally analyze robustness of the approach against errors in the inner-loop motor position controller.
	\item We validate the approach experimentally on a flexible joint with strongly nonlinear torque-deflection characteristic, obtaining improved link-side trajectory tracking under adaptation.
\end{itemize}

The remainder of this paper is structured as follows. We introduce the target system dynamics and problem formulation in Sec.~\ref{sec:problem}. The adaptive controller is presented in Sec.~\ref{sec:controller} followed by a robustness analysis against errors induced by the inner-loop motor-position controller in Sec.~\ref{sec:robustness}. We experimentally validate the approach in Sec.~\ref{sec:experiments}. Finally, Sec.~\ref{sec:conclusion} concludes the work.

\section{System Dynamics and Problem Statement}\label{sec:problem}
Consider the dynamics of a robotic system with $ n $ flexible joints modeled as 
\begin{subequations}\label{eq:RD}
	\begin{align}
		\M(\q)\ddq + \T C(\q,\dq)\dq + \T g(\q) &= \springtorque(\qm- \ql)\\
		\T B \ddqm + \springtorque(\qm- \ql) = \T\tau_m.
	\end{align}
\end{subequations}
Therein, $\q \in \Real^n $ and $\qm \in \Real^n $ denote the vectors of link and motor positions, respectively. The link-side inertia matrix $\M(\q) \in \Real^{n\times n} $ is positive definite and is assumed to have bounded eigenvalues. The Coriolis and centrifugal matrix $\T C(\q, \dq) \in \Real^{n\times n}$ fulfills the skew symmetry of $ (\smash{\dot{\M}} -2\T C) $. The motor inertia matrix and the vector of applied motor torques are denoted as $\T{B}\in\smash{\Real^{n\times n}}$ and $ \T\tau_m \in \smash{\Real^n} $, respectively. The motor side and the link side are elastically coupled via the elastic torque vector $\springtorque(\qm- \q) \in \Real^n $. It is assumed to be derived from an elastic potential and thus to be monotonic and Lipschitz continuous in the deflection $\T\theta - \q$.

\subsection{Motor Position Controlled Flexible Joint Robots}
We consider systems \eqref{eq:RD} under motor position control. More precisely, $\T \tau_m $ is provided by an inner-loop controller such that the motor position $ \T\theta $ follows a desired motor position trajectory $ \T\theta_d(t) \in \Real^n$. The link-side dynamics then reduce to
\begin{equation}\label{eq:RD_reduced}
	\M\ddq + \T C\dq + \T g = \springtorque(\T\theta_d - \ql + \T e_\theta ).
\end{equation}
Therein, $\T e_\theta = \T\theta -\T\theta_d \in \Real^n $ denotes the motor-side tracking error. We consider the desired motor angle as our control input, i.e., $ \ctrlinput=\T\theta_d $. 

\subsection{Link-Side Control}\label{sec:Kurzschluss}
If the elastic torque relation $\springtorque(\qm-\q)$ is known and if the motor position error remains sufficiently small such that it can be neglected, a controller for \eqref{eq:RD_reduced} can be implemented by solving
\begin{equation}\label{eq:kurzschluss}
	\T 0 = \springtorque(\ctrlinput - \q) - \T\tau_d
\end{equation}
for the control input $\ctrlinput$. This results in the closed-loop link-side dynamics
\begin{equation}
	\M\ddq + \T C\dq + \T g = \T\tau_d
\end{equation}
corresponding to a rigid-body model driven by the virtual control input $\T\tau_d$, which can be chosen using standard rigid-body robot control techniques. Consequently, the controller design for a motor-position-controlled flexible-joint robot can then be reduced to the one for a conventional rigid-body robot.

\subsection{Problem Statement}
The objective of this work is to develop a model-based adaptive control scheme for the system \eqref{eq:RD_reduced}, that achieves tracking of a desired link-side trajectory ${\q_d(t) \in \Real^n}$ despite uncertainties in the nonlinear torque-deflection model and adapts an estimate of the model online.

\section{Adaptive Control Approach}\label{sec:controller}
The approach presented in this work builds upon the classical adaptive robot controller \cite{Slotine1987} for torque-controlled rigid robots with uncertain inertial parameters. Utilizing the associated Lyapunov-based design methodology, we derive an adaptive controller for the considered class of motor-position-controlled flexible-joint robots with known inertial parameters but uncertain joint stiffness.
For this purpose, the uncertain torque-deflection relation $\springtorque(\qm-\q)$ is expressed as the product of a known, control-input-dependent regressor matrix and an uncertain parameter vector, as detailed in Sec.~\ref{sec:regressor}. Based on this representation, an implicit control law and a parameter adaptation law are formulated in Sec.~\ref{sec:ctrl law} and Sec.~\ref{sec:adaptation}.

\subsection{Assumption on the Motor-Position Controller}
For the derivation of the adaptive control law we will impose the following assumption.
\begin{assumption}\label{a:error}
	The motor position controller operates on a sufficiently fast time scale such that the resulting motor position error $\T e_\theta$ remains negligible for the purposes of the subsequent analysis. In particular, we consider the case $\T e_\theta = \T 0$.
\end{assumption}
We will discard this idealization in Sec.~\ref{sec:robustness} analyzing robustness of the proposed approach.

\subsection{Regressor and Parameter Vector}\label{sec:regressor}
Given Assumption~\ref{a:error}, it holds $ {\springtorque(\T\theta - \q)} = {\springtorque(\T u - \q)} $.
\begin{assumption}\label{a:regressor}
	The torque-deflection relation $\springtorque(\ctrlinput-\q)$ can be decomposed as
	\begin{equation}\label{eq:regressor}
		\springtorque(\ctrlinput-\q) \;=\; \T Y(\ctrlinput-\q)\,\T k,
	\end{equation}
	where $\T Y(\ctrlinput-\q)\in\Real^{n\times p}$ is a regressor matrix containing a finite number of known basis functions of the deflection, and $\T k \in\Real^p$ is an uncertain constant parameter vector.
\end{assumption}

In practice, the required set of basis functions is typically not known a priori. 
However, the mechanical design and calibration measurements usually provide insight into the qualitative structure of $\springtorque(\ctrlinput-\q)$. 
This knowledge allows the selection of a sufficiently rich set of basis functions in $\T Y(\ctrlinput-\q)$ such that the resulting parameterization approximates the true torque-deflection relation over the relevant operating range. 

Given the uncertainty in $ \T k $, only an estimate $ \khat \in \Real^p $ is considered available, in the following. The estimated torque-deflection relation evaluates as 
\begin{equation}\label{eq:springtorque estimated}
	\hat{\springtorque}(\ctrlinput-\q) = \T Y (\ctrlinput-\q) \, \khat. 
\end{equation}
We will utilize this relation to formulate a link-side tracking controller and then provide a parameter adaptation law for $ \khat $, such that the tracking errors converge to zero.

\subsection{Control Law}\label{sec:ctrl law}

Following \cite{Slotine1987}, we aim at injecting the torque
\begin{equation}\label{eq:slotine li}
	\T\tau_d = \M(\q)\ddq_r + \T{C}(\q,\dq)\dqr + \T g(\q) -\T{K}_s(t)\T{s}
\end{equation}
on the link-side, i.e., a tracking controller based on the sliding-variable $ \T s \in \Real^n $. The sliding variable is defined as 
\begin{equation}\label{eq:s}
	\T{s } = \dqerr + \T{K}_p \qerr = \dq - \dq_r
\end{equation}
with $ \dq_r = \dq_d - \T{K}_p \qerr $ and constant, positive definite gain matrix $ {\T{K}_p \in\Real^{n\times n}}$. By construction, convergence of $\T s$ to zero implies convergence of the tracking errors $\qerr = \q - \q_d$ and $\dqerr = \dq - \dq_d$ to zero. 
The gain matrix $ \Ks(t)\in\Real^{n \times n} $ in \eqref{eq:slotine li} is continuous, positive definite with uniformly bounded eigenvalues.

If the true parameter vector $\T k$ from \eqref{eq:regressor} were known, the motor input could be computed directly from \eqref{eq:kurzschluss} in order to inject the desired torque \eqref{eq:slotine li} exactly. With only the estimate~$ \khat $ available, we instead choose $ \ctrlinput $ such that the following is satisfied
\begin{equation}\label{eq:control law}
	\T 0 = \T Y(\ctrlinput- \q)\khat \, - \, \underbrace{ ( \M\ddq_r + \T{C}\dqr + \T g -\Ks\T{s})}_{\T\tau_d} .
\end{equation}
Thus, the motor position is chosen such that the \emph{estimated} spring torque balances the desired torque $\T \tau_d$. Equation \eqref{eq:control law} constitutes an implicit control law. Existence of an analytical solution for $\ctrlinput$ depends on the utilized basis functions in the regressor. Two common special cases admitting an explicit formulation of the control law are addressed in the following. If no analytical solution exists, \eqref{eq:control law} must be solved numerically for the control input. 

\subsubsection{Case 1: Linear Regressor}
For joints with linear torque-deflection relation, where the only uncertainty is the joint stiffness, the regressor $ {\T Y(\ctrlinput- \q) = \textup{diag}(\ctrlinput - \q) \in \Real^{n\times n}} $ only contains linear functions. Equation \eqref{eq:control law} can then be solved for $ \ctrlinput  $ as
\begin{equation}\label{eq:linear}
	\ctrlinput = \q + \inv{\left(\textup{diag}\left(\khat\right)\right)}\T{\tau_d},
\end{equation}
i.e., there exists an explicit control law. To ensure well-defined torque generation, the entries of $\khat$ must remain nonzero.

\subsubsection{Case 2: Special Cubic Regressor}\label{sec:cubic}
Consider a cubic torque-deflection relation of the $i$-th joint of the form ${ \varphi_i(\theta_i - q_i) = k_{3,i}(\theta_i - q_i)^3 + k_{1,i}(\theta_i - q_i)} $ with linear and cubic stiffness coefficients $k_{1,i}, k_{3,i} > 0$, where all uncertainty is assumed to be contained in the coefficients. The mapping $ \varphi_i(\theta_i - q_i) $ is strictly monotonically increasing. Hence, the relation is globally invertible.

For this special case, the implicit control law~\eqref{eq:control law} reduces componentwise to the cubic equation
\begin{equation}
	\begin{bmatrix}
		(u_{i} - q_i)^3 & (u_{i} - q_i)
	\end{bmatrix}
\begin{bmatrix}
	\hat{k}_{3,i} \\ \hat{k}_{1,i}
\end{bmatrix}
	= \tau_{d,i}.
\end{equation}

Using Cardano's formula, the solution $ u_{i}$ can be expressed analytically for arbitrary desired torques $ \tau_{d,i} \in \Real$ as
\begin{equation}\label{eq:cubic_solution}
	u_{i}
	= q_i
	+ \sqrt[3]{\frac{\tau_{d,i}}{2\hat{k}_{3,i}} + a_i}
	+ \sqrt[3]{\frac{\tau_{d,i}}{2\hat{k}_{3,i}} - a_i},
\end{equation}
with
\begin{equation}
	a_i
	= \sqrt{
		\frac{\tau_{d,i}^2}{4\hat{k}_{3,i}^2}
		+ \frac{\hat{k}_{1,i}^3}{27\hat{k}_{3,i}^3}
	}.
\end{equation}
as long as the estimates $ \hat{k}_{3,i} $ and $ \hat{k}_{1,i} $ remain nonzero.

\subsection{Adaptation Law}\label{sec:adaptation}
Substituting \eqref{eq:control law} in the robot dynamics \eqref{eq:RD_reduced} with Assumptions~\ref{a:error} and~\ref{a:regressor}
yields the closed-loop dynamics
\begin{align}
	\M\ddq + \T C\dq + \T g &= \T{\tau}_d + \T{Y}(\ctrlinput- \q)\kerr,\\
	\label{eq:closed loop}
	\M\dot{\T s}  + (\T{C} +\Ks)\T{s} &=  \T{Y}(\ctrlinput- \q)\kerr.
\end{align}
The term $ \T{Y}(\ctrlinput- \q)\kerr $ acts as a disturbance decreasing with the parameter estimation error $ \kerr = \T k -\khat $. We will formulate an adaptation law, such that $\T{s}$ converges to zero despite this disturbance. 
We employ the Lyapunov function candidate
\begin{equation}\label{eq:S}
	{S}(\T s, \kerr, t) = \underbrace{\frac{1}{2}\T{s}^T\M(\q)\T{s}}_{S_s(\T s, t)} + \underbrace{\frac{1}{2}\kerr^T\T{A}\kerr}_{S_k(\kerr)}.
\end{equation}
which is structurally identical to the one in \cite{Slotine1987}. It consists of a time-dependent tracking contribution $S_s(\T s, t)$ and a parameter error contribution $S_k(\kerr)$ with diagonal, positive definite matrix $ \T A \in \Real^{p\times p} $.

As the inertia matrix $\M(\q)$ is symmetric and positive definite with uniformly bounded eigenvalues, ${S}$ is bounded from below and above by positive definite functions in $(\T s,\kerr)$.

The time derivative of $ S $ along the trajectories of the closed-loop system evaluates as
\begin{align}
	\dot{{S}} & = -\T{s}^T\Ks \T{s} \, + \, \left(\dkerr^T\T{A} \,+\, \T{s}^T\T{Y}(\ctrlinput-\q)\right)\kerr
\end{align}
due to skew-symmetry of $\dot{\M} -2\T{C}$. It becomes negative-semi-definite for the parameter adaptation law
\begin{equation}\label{eq:adaptation law}
	 \dot{\hat{\T{k}}} = \inv{\T{A}}\T{Y}^T\T{s},
\end{equation}
which renders the equilibrium $\begin{bmatrix}
	\T{s}^T & \kerr^T
\end{bmatrix} = \T 0 $ uniformly globally stable.

To analyze convergence properties, Barbalat's Lemma can be utilized equivalently as in \cite{Slotine1987}. In short, the boundedness of
$\ddot{S} = -\T s^T\dot{\Ks}\T s + 2\T{s}^T\Ks\dot{\T{s}}$ implies that $\dot{S}$, which is known to be bounded, converges to zero for $t\rightarrow\infty$. Thus $\T{s}$ globally uniformly converges to zero for $t\rightarrow\infty$. By \eqref{eq:s}, this implies global uniform convergence of $ \begin{bmatrix}
	\qerr^T & \dqerr^T
\end{bmatrix} \rightarrow \T 0 $ for $t\rightarrow\infty$.

\subsection{Summary}
We summarize the previous results as follows.
\begin{theorem}
	Consider the motor-position controlled flexible joint robot \eqref{eq:RD_reduced} with uncertain torque-deflection relation $ \springtorque(\T\theta - \T q) = \T Y(\T\theta - \T q)\,\T k $ under Assumptions~\ref{a:error}~and~\ref{a:regressor}. Under the control law \eqref{eq:control law} and adaptation law \eqref{eq:adaptation law} global uniform convergence of $\begin{bmatrix}
		\qerr^T & \dqerr^T
	\end{bmatrix} \rightarrow \T 0$ is achieved for $ t \rightarrow \infty$. A parameter estimate $\khat$ is adapted online such that the parameter estimation error remains bounded.
\end{theorem}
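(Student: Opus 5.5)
The argument follows the Lyapunov design of Sec.~\ref{sec:adaptation} and tightens the two places where the summary there is informal: the well-posedness of the implicit law \eqref{eq:control law} and the Barbalat step.

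\textbf{Closed loop.} Under Assumption~\ref{a:error}, the link dynamics \eqref{eq:RD_reduced} are driven by $\springtorque(\ctrlinput-\q)$. By Assumption~\ref{a:regressor}, $\springtorque(\ctrlinput-\q)=\T Y(\ctrlinput-\q)\T k$. Because $\ctrlinput$ solves \eqref{eq:control law}, we have $\T Y(\ctrlinput-\q)\khat=\T\tau_d$, and therefore
\[
\springtorque(\ctrlinput-\q)=\T\tau_d+\T Y(\ctrlinput-\q)\kerr .
\]
Substituting $\ddq=\dot{\T s}+\ddq_r$ and $\dq=\T s+\dqr$ into \eqref{eq:slotine li} gives \eqref{eq:closed loop}. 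Here I assume a solution $\ctrlinput$ of the implicit equation exists along the trajectory, as the paper does implicitly (for example, nonzero estimates in Cases 1 and 2).

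\textbf{Lyapunov function.} Take $S$ from \eqref{eq:S}. Differentiate, use $\dkerr=-\dot{\hat{\T k}}$ (since $\T k$ is constant), and use skew-symmetry of $\dot{\M}-2\T C$ to cancel $\tfrac12\T s^T\dot\M\T s-\T s^T\T C\T s$. With the adaptation law \eqref{eq:adaptation law}, the cross term $(\dkerr^T\T A+\T s^T\T Y)\kerr$ vanishes, leaving
\[
\dot S=-\T s^T\Ks\T s\le -\lambda_{\min}\abs{\T s}^2\le 0 .
\]
Since $\M$ has uniformly bounded eigenvalues and $\T A$ is positive definite, $S$ is sandwiched between positive definite quadratics in $(\T s,\kerr)$. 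Hence $\T s$ and $\kerr$ are bounded, which gives the last claim of the theorem, and the equilibrium $(\T s,\kerr)=\T 0$ is uniformly globally stable. Integrating $\dot S$ gives $\int_0^\infty \T s^T\Ks\T s\,dt\le S(0)$, so $\T s\in\Ltwo$.

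\textbf{Convergence (main obstacle).} The remaining step is uniform continuity of $\dot S$, or equivalently boundedness of $\dot{\T s}$, so that Barbalat's Lemma yields $\T s\to\T 0$. I would prove boundedness of $\dot{\T s}$ from \eqref{eq:closed loop}:
\[
\dot{\T s}=\inv{\M}\bigl(\T Y(\ctrlinput-\q)\kerr-(\T C+\Ks)\T s\bigr).
\]
\begin{itemize}
\item $\inv{\M}$ is bounded by the eigenvalue assumption, $\Ks$ is bounded by assumption, and $\T s$, $\kerr$ are bounded from the previous step.
\item Bounded $\T s$ and a bounded $\dot{\T q}_d$ (I will assume the desired trajectory $\q_d,\dq_d,\ddq_d$ is bounded) give bounded $\qerr$, via the stable filter $\dqerr=-\T K_p\qerr+\T s$. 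This in turn gives bounded $\dq$, $\dqr$ and $\ddq_r$, the last because $\ddq_r=\ddq_d-\T K_p(\T s-\T K_p\qerr)$. Bounded $\q$, $\dq$ then give bounded $\T C\T s$.
\item The delicate term is $\T Y(\ctrlinput-\q)\kerr$, since the regressor depends on the control input. I would bound it through the true spring torque, $\T Y(\ctrlinput-\q)\T k=\springtorque(\ctrlinput-\q)$, which equals $\M\ddq+\T C\dq+\T g$. Alternatively, for Cases 1 and 2, I would bound the deflection directly: $\T\tau_d$ is bounded by the previous item, so \eqref{eq:linear} or \eqref{eq:cubic_solution} gives a bounded $\ctrlinput-\q$ provided the estimates stay bounded away from zero. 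In general I would state this as a mild regularity requirement that the solution of \eqref{eq:control law} stays in a bounded deflection range, so that the continuous basis functions in $\T Y$ are bounded.
\end{itemize}

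With $\dot{\T s}$ bounded and $\dot\Ks$ bounded (needed for $\ddot S$), $\dot S$ is uniformly continuous. Since $S$ is bounded below and nonincreasing, Barbalat's Lemma gives $\dot S\to 0$, hence $\T s\to\T 0$. Finally, the exponentially stable filter $\dqerr=-\T K_p\qerr+\T s$ driven by a vanishing input gives $\qerr\to\T 0$ and then $\dqerr=\T s-\T K_p\qerr\to\T 0$, which establishes convergence of $[\qerr^T\ \dqerr^T]$ for every initial condition.
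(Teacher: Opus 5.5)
Your proposal follows the paper's own argument: the same function $S$, skew-symmetry of $\dot{\M}-2\T C$, cancellation of the cross term by \eqref{eq:adaptation law}, and Barbalat's Lemma through boundedness of $\dot{\T s}$ and $\dot{\Ks}$. It is also more explicit than the paper, whose claim that $\ddot S$ is bounded silently needs $\T Y(\ctrlinput-\q)\kerr$ to be bounded.

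One caution: your first route for that term is circular. The expression $\M\ddq+\T C\dq+\T g$ contains $\ddq=\dot{\T s}+\ddq_r$, and boundedness of $\dot{\T s}$ is exactly what you are trying to establish. The step must therefore rest on your Case~1/2 bound (with the estimates kept away from zero) or on the bounded-deflection assumption. Both are hypotheses the theorem leaves unstated.
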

Note that convergence of $\khat$ is not guaranteed. Figure~\ref{fig:intro} shows the overall structure of the controller.

\section{Robustness against Errors Induced by the Motor Position Control}\label{sec:robustness}
Let us now evaluate robustness of the proposed approach against errors induced by the inner-loop controller. Consider \eqref{eq:RD_reduced}. It holds \begin{align}\label{key}
	\springtorque(\T\theta - \q) &=  \springtorque\left(\T\theta_d - \q + \etheta\right)\\
	\label{eq: k etheta}
	&=  \springtorque\left(\T\theta_d - \q \right) + \taue,
\end{align}
with $ \taue \in \Real^n $ and $ \abs{\taue} \leq \klipschitzmax \, \abs{\etheta}$ due to the monotonicity and Lipschitz continuity of the spring torque. Here, $ \klipschitzmax > 0 $ denotes the Lipschitz constant of the joint with the largest maximum stiffness. Applying the adaptive controller \eqref{eq:control law} and \eqref{eq:adaptation law} then results in the closed-loop system
\begin{subequations}\label{eq:closed loop disturbed}
	\begin{align}
		\label{eq:a}
		\M\dot{\T s}  + (\T{C} +\Ks)\T{s} &=  \T{Y}(\ctrlinput- \q)\kerr + \taue \\
		\label{eq:b}
		\dkerr &= -\inv{\T{A}}\T{Y}^T\T{s}.
	\end{align}
\end{subequations}
The error dynamics \eqref{eq:a} in $ \T s $ now includes an additional disturbance term $ \T{\tau_e} $ as an input, which depends on the motor position error $ \etheta $.

\begin{theorem}Consider the system \eqref{eq:RD_reduced} with motor position error $ \etheta(t) $ under the control law \eqref{eq:control law} and adaptation law \eqref{eq:adaptation law}. The following holds:
	\begin{enumerate}
		\item The closed-loop system is finite-gain $ \Ltwo $ stable with respect to the input $ \etheta $ and the output $ \T s $. 
		\item If $ \etheta \in \Ltwo $, $ \T s $ and $ \kerr $ are uniformly bounded. Further, $ \T s \rightarrow \T 0 $ for $ t \rightarrow \infty $ and thus $ \begin{bmatrix}
			\qerr^T & \dqerr^T
		\end{bmatrix} \rightarrow \T 0 $ for $ t \rightarrow \infty $.
	\end{enumerate}
\end{theorem}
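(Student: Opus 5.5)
We reuse the Lyapunov function $S$ from \eqref{eq:S} for the perturbed closed loop \eqref{eq:closed loop disturbed}. By the same computation as in Sec.~\ref{sec:adaptation}, the skew-symmetry of $\dot{\M}-2\T C$ gives $\T s^T\M\dot{\T s}+\tfrac12\T s^T\dot{\M}\T s=-\T s^T\Ks\T s+\T s^T\T Y\kerr+\T s^T\taue$. The adaptation law \eqref{eq:b} cancels the cross term $\T s^T\T Y\kerr$ against $\dkerr^T\T A\kerr$. Hence
\begin{equation*}
	\dot S = -\T s^T\Ks\T s + \T s^T\taue \le -\lambda_{\min}\abs{\T s}^2 + \klipschitzmax\abs{\T s}\,\abs{\etheta},
\end{equation*}
where $\lambda_{\min}>0$ is a uniform lower bound on the eigenvalues of $\Ks(t)$, and we used $\abs{\taue}\le\klipschitzmax\abs{\etheta}$ from \eqref{eq: k etheta}. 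Young's inequality yields
\begin{equation*}
	\dot S\le -\tfrac{\lambda_{\min}}{2}\abs{\T s}^2+\tfrac{\klipschitzmax^2}{2\lambda_{\min}}\abs{\etheta}^2 .
\end{equation*}
This dissipation inequality is the key intermediate bound.

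For item 1, we integrate the dissipation inequality over $[0,T]$ and use $S\ge 0$. This gives
\begin{equation*}
	\int_0^T\abs{\T s}^2dt\le \frac{\klipschitzmax^2}{\lambda_{\min}^2}\int_0^T\abs{\etheta}^2dt+\frac{2}{\lambda_{\min}}S(0).
\end{equation*}
Taking square roots and using $\sqrt{a+b}\le\sqrt a+\sqrt b$, we obtain $\|\T s_T\|_{\Ltwo}\le \gamma\|\etheta{}_T\|_{\Ltwo}+\beta$ with $\gamma=\klipschitzmax/\lambda_{\min}$ and $\beta=\sqrt{2S(0)/\lambda_{\min}}$. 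This is finite-gain $\Ltwo$ stability from $\etheta$ to $\T s$. The bias $\beta$ depends on the initial state through $S(0)$, which includes the initial parameter error, as is permitted in the standard definition.

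For item 2, let $\etheta\in\Ltwo$. Integrating the dissipation inequality gives $S(t)\le S(0)+\frac{\klipschitzmax^2}{2\lambda_{\min}}\|\etheta\|_{\Ltwo}^2<\infty$ for all $t$. Since $S$ is bounded below by a positive definite function of $(\T s,\kerr)$, uniformly in $t$ because $\M$ has bounded eigenvalues, $\T s$ and $\kerr$ are uniformly bounded. Item 1 then gives $\T s\in\Ltwo$.

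To conclude $\T s\to\T 0$, we apply the Barbalat-type lemma: if $\T s\in\Ltwo\cap\mathcal L_\infty$ and $\dot{\T s}\in\mathcal L_\infty$, then $\T s\to\T 0$. The main obstacle is showing $\dot{\T s}$ bounded. From \eqref{eq:a}, $\dot{\T s}=\inv{\M}\big(-(\T C+\Ks)\T s+\T Y(\ctrlinput-\q)\kerr+\taue\big)$, so we need boundedness of each term.

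\begin{itemize}
	\item Bounded $\T s$ together with a bounded desired trajectory gives bounded $\qerr,\dqerr$, because $\dot{\qerr}=-\T K_p\qerr+\T s$ is a stable linear filter. Hence $\q,\dq,\dq_r,\ddq_r$ are bounded.
	\item $\T C$ is then bounded, and $\M^{-1}$ is bounded by assumption.
	\item For $\T Y(\ctrlinput-\q)$ we need a bounded deflection $\ctrlinput-\q$. This follows from \eqref{eq:control law}: $\T\tau_d$ is bounded, and we assume, as implicitly in the previous section, that the solution of the implicit law stays bounded when $\T\tau_d$ and $\khat$ are bounded, e.g.\ the estimated spring map stays uniformly invertible. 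In the special cases \eqref{eq:linear} and \eqref{eq:cubic_solution} this holds as long as the estimates stay away from zero. Continuity of the basis functions then bounds $\T Y$.
	\item $\taue$ must be bounded, which needs $\etheta\in\mathcal L_\infty$. This does not follow from $\etheta\in\Ltwo$ alone, so I would add it as a mild hypothesis. A bounded tracking error of the inner position loop is physically natural.
\end{itemize}

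With these terms bounded, $\dot{\T s}$ is bounded, $\T s$ is uniformly continuous, and $\int_0^\infty\abs{\T s}^2dt<\infty$ forces $\T s\to\T 0$. Finally, \eqref{eq:s} gives $\qerr\to\T 0$ through the exponentially stable filter driven by a vanishing input, and $\dqerr=\T s-\T K_p\qerr\to\T 0$.
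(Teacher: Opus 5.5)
Your proposal is correct and follows the paper's proof essentially step for step. It uses the same $S$, the same Young's-inequality dissipation bound (yours corresponds to the paper's choice $\epsilon=\tfrac12$), integration with $S\ge 0$ for item~1, and boundedness of $S$ followed by Barbalat for item~2; there you are more careful than the paper, which simply asserts boundedness of $\dot{\T s}$ from \eqref{eq:a} and so tacitly uses the same bounded-deflection, bounded-reference and bounded-$\taue$ conditions that you make explicit. Your extra hypothesis $\etheta\in\mathcal{L}_\infty$ can in fact be dropped: Cauchy--Schwarz gives $\int_{t_1}^{t_2}\abs{\taue}\,dt\le\klipschitzmax\sqrt{t_2-t_1}\,\|\etheta\|_{\Ltwo}$, which together with the boundedness of the remaining terms in \eqref{eq:a} makes $\T s$ uniformly continuous, and with $\T s\in\Ltwo$ that is enough for Barbalat.
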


\begin{proof}
Consider the function $ S(\T s,\kerr,t) $ from \eqref{eq:S}. The time derivative along the trajectories of the closed-loop system \eqref{eq:closed loop disturbed} is then obtained as
\begin{align}\label{eq:dS_disturbed}
	\dot{S} &= - \T{s}^T \Ks \T{s} + \T s^T \taue\\
	\label{eq: dissipation}
	&\leq - \underbrace{(1 - \epsilon)\ksmin}_{\coloneq c_1} \, \T{s}^T\T s + \underbrace{\inv{(4\epsilon \ksmin)}}_{\coloneq c_2} \, \taue^T\taue
\end{align}
where $ \ksmin > 0 $ is the smallest eigenvalue of $ \Ks $ and $ \epsilon$ is an arbitrary constant, such that it holds $ 0 < \epsilon < 1 $.\footnote{Note that \eqref{eq: dissipation} holds due to Young's inequality.} The positive constants $ c_1$ and $ c_2 $ are defined by this relation. Using this as a starting point, we prove the statements 1 and 2 consecutively.

\subsubsection*{Statement 1)}
Let us integrate $ \eqref{eq: dissipation} $ from $ t = 0 $ to $ t = T $. We obtain
\begin{align}\label{eq: integrated S}
	S_T - S_0 \leq - c_1 \int_{0}^{T}\T{s}^T\T s \, dt + c_2 \int_{0}^{T}\taue^T \taue \, dt
\end{align}
with $ S_T =S(\T s(T), \kerr(T), T) $ and $ S_0 = S(\T s(0), \kerr(0), 0) $. With $ S_T \geq 0 $, this immediately yields

\begin{align}\label{key}
	\sqrt{\int_{0}^{T} \T s^T \T s \, dt} &\leq \, \gamma_1 \,  \sqrt{\int_{0}^{T} \taue^T \taue \, dt} + \beta \\
	&\leq \, \gamma_2 \,  \sqrt{\int_{0}^{T} \etheta ^T \etheta \, dt} + \beta
\end{align}
with $ \gamma_1 =  \sqrt{c_2\inv{c_1}}$, $ {\beta = \sqrt{S_0\inv{c_1}}} $  and $ \gamma_2 = \klipschitzmax\gamma_1 $. Considering $ T \rightarrow \infty $, this implies that the closed-loop system with input $ \etheta $ and output $ \T s $ is finite-gain $ \Ltwo $ stable (c.f., e.g. \cite[Definition 5.1]{Khalil2002}).

\subsubsection*{Statement 2)}
Consider again \eqref{eq: integrated S}. For $ \etheta \in \Ltwo $ it holds $ \taue \in \Ltwo $. Further, we just showed that then $ \T s  \in \Ltwo $. Thus, the right hand side of \eqref{eq: integrated S} is bounded $ \forall t $. This implies uniform boundedness of $ S_T $, which, in turn, yields uniform boundedness of $ \T s $ and $ \kerr $.

Consider the function $ {f(t) = \int_{0}^{t} \T{s}^T\T s \, dr} $. Given $ \etheta \in \Ltwo $ it holds $ \T s \in \Ltwo $ as it has been shown in the proof of Statement 1. As $ f(t) $ is thus upper bounded and as the integrand is monotonically increasing, $ f(t) $ approaches a finite limit for $ t \rightarrow \infty $.

Now consider $ \ddot{f}(t) = 2 \T s^T\dot{\T{s}} $. As $ \T s $ and $ \kerr $ have been shown to be bounded, we can conclude boundedness of $ \dot{\T s} $ from \eqref{eq:a}. Thus, $ \ddot{f}(t) $ is bounded. This implies that $ \dot{f}(t) = \T s^T\T s $ is uniformly continuous.

By Barbalat's Lemma, we can now conclude $ \dot{f}(t)\rightarrow 0 $ for $ t \rightarrow \infty $. Thus, $ \T{s}\rightarrow\T 0 $ and $ \begin{bmatrix}
	\qerr^T & \dqerr^T
\end{bmatrix} \rightarrow \T 0$.
	
\end{proof}

\section{Experiments}\label{sec:experiments}
To validate the performance of the approach under distinct torque-deflection characteristics, we perform trajectory tracking experiments with a variable stiffness actuator varying the physical stiffness during the motion.

\subsection{Setup}
The utilized setup is depicted in Fig.~\ref{fig:setup}. It consists of the motor-position controlled VSA \cite{Catalano2011} rotating a weight of $\SI{0.5}{\kilogram}$ attached on a lever in the horizontal plane.
\begin{figure}[tb]
	\vspace{2mm}
	\centering
	\includegraphics[width=0.46\textwidth]{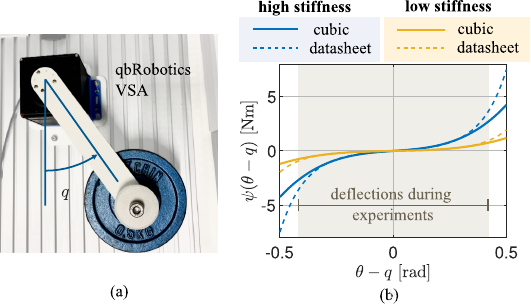}
	\caption{Experimental setup consisting of one VSA by qbRobotics rotating a $\SI{0.5}{\kilogram}$ weight in a horizontal plane (a). The utilized cubic torque-deflection models for the high and low stiffness settings approximately match the datasheet models for relevant deflections $\vert \theta - q \vert \leq \SI{0.4}{\radian}$ (b).}
	\label{fig:setup}
\end{figure}

\subsubsection{Dynamics and Nominal Torque-Deflection Model}
The considered actuator contains two servo motors with motor positions $ \T \theta_1 $ and $ \T\theta_2 $. Stiffness and position adjustments are decoupled \cite{Catalano2011}. Via increasing the relative motor position ${\Delta \T\theta = \num{0.5}(\T\theta_2 - \T\theta_1)} $, the stiffness setting can be increased, whereas the equilibrium position is determined as ${\T\theta = \num{0.5}(\T\theta_1 + \T\theta_2)}$. 
Given a fixed $ \Delta\T\theta $, the actuator can thus be treated as a series elastic actuator \cite{Lentini2019,Moyron2025}. The link-side system dynamics can then be expressed in terms of $\T{\theta}$ in the form \eqref{eq:RD_reduced} as 
\begin{align}
	\M \ddq &= \springtorque(\T\theta - \q)\\
	& = \springtorque(\ctrlinput - \q) + \taue
\end{align}
with $\M = \SI{0.0129}{\newton\metre\second^2/\radian}$ and control input $\T{u}$.

In the following, we operate the actuator at two distinct relative motor positions. We will refer to the setting with $ {\Delta\T\theta = \SI{0.5}{\radian}} $ as the \emph{high stiffness} preset and $ {\Delta\T\theta = \SI{0.35}{\radian}} $ as the \emph{low stiffness} preset. According to the datasheet and \cite{Lentini2019}, the corresponding nonlinear torque-deflection relations with deflection $\T{\delta} =  \T\theta - \q$ are obtained as 
\begin{subequations}\label{eq:cube data}
	\begin{align}\label{eq:cube data high}
	\springtorque_{high}(\T{\delta}) &= 0.166\, \textup{sinh}\left(8.999\, \T{\delta} \right),\\
	\label{eq:cube data low}
	\springtorque_{low}(\T{\delta}) &= 0.043\, \textup{sinh}\left(8.999\, \T{\delta} \right),
\end{align}
\end{subequations}
respectively.

\subsubsection{Regressor and Parameter Vector}
Considering \eqref{eq:cube data}, a hyperbolic sine function could be a suitable choice for a basis function of the regressor. Moreover, as depicted in Fig.~\ref{fig:setup}(b), the functions \eqref{eq:cube data} can be approximated by the cubic function
\begin{align}\label{eq:cube cubic}
	\springtorque_{cubic} (\T{\delta}) &= k_3 \T{\delta}^3 + k_1 \T{\delta}\\
	&= \begin{bmatrix}
		\T{\delta}^3 & \T{\delta}
	\end{bmatrix} \begin{bmatrix}
	k_3 \\ k_1
	\end{bmatrix} = \T{Y}(\T\delta)\T k
\end{align}
with parameters $ {\T k_{high} = \begin{bmatrix}
	28 & 1.5
\end{bmatrix}^T} $ for \eqref{eq:cube data high} and $ {\T k_{low} = \begin{bmatrix}
8 & 0.38
\end{bmatrix}^T} $ for \eqref{eq:cube data low} in a range of deflections ${\T\delta \leq \SI{0.4}{\radian}}$. 

We will utilize the cubic regressor  $\T Y(\ctrlinput-\q) $ from \eqref{eq:cube cubic} with $\T\delta = \ctrlinput -\q$ for the formulation of the adaptive controller. This choice is guided by the datasheet relation and allows for an explicit formulation of the control law while the behavior for large and small deflections can be adjusted distinctly via the cubic and linear parameter.

 \subsubsection{Desired Trajectory}
For the experiments, we aim at tracking a desired link-side trajectory 
\begin{equation}\label{eq:traj}
	\q_d(t) = 0.5 \pi \left( \sin\left(\frac{8\pi}{T} t\right) + \sin\left(\frac{2\pi}{T} t\right) \right),
\end{equation}
which is periodic with $T = \SI{6.5}{\second}$. The actuator's stiffness setting is held constant for two consecutive periods before being switched between the defined \emph{high} and \emph{low} settings. This alternating pattern is maintained over multiple cycles throughout the experiment.

\begin{table}[tb]
	\vspace{2mm}
	\centering
	\includegraphics[]{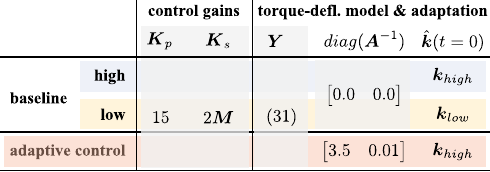}
	\caption{Gains and Torque-deflection-model used for the implementation of the baseline and proposed approaches.}
	\label{tab:gains}
\end{table}

\subsubsection{Control Approaches}
We compare the performance of the proposed \emph{adaptive controller} using an initial parameter estimate $ \khat_{0} = \T k_{high} $ against two baseline approaches, which we will refer to as \emph{baseline - high} and \emph{baseline - low}. These use the same control law \eqref{eq:control law}, but the constant parameter estimates $ \khat = \T k_{high} = const. $ and $ \khat = \T k_{low} = const. $, respectively, i.e., we deactivate the parameter adaptation. 
To ensure physically meaningful estimates of the torque-deflection models and to prevent numerical issues due to \eqref{eq:cubic_solution}, we implement lower thresholds for the parameter estimates, i.e. $ \hat k_1, \, \hat k_3 \geq 10^{-3} $.

\begin{figure*}[htb]
	\vspace{2mm}
	\centering
	\includegraphics[]{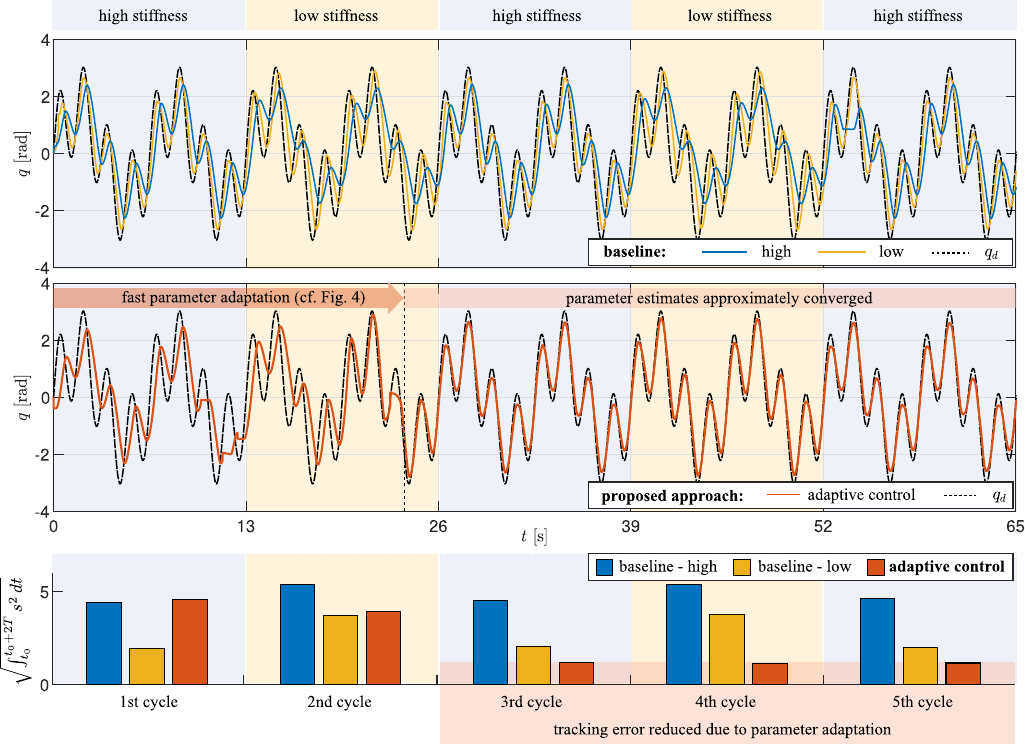}
	\caption{Tracking performance of baseline approaches (cf. Table~\ref{tab:gains}) utilizing constant torque-deflection models (top) versus our proposed adaptive controller (middle). The norm of $\T s$ integrated over a cycle reduces under the parameter adaptation (bottom), such that the adaptive controller eventually outperforms the baseline approaches both for high and low stiffness presets of the VSA.}
	\label{fig:tracking}
\end{figure*}

We provide an overview of utilized gains and initial parameter estimates in Table~\ref{tab:gains}.

The controllers are implemented at a control frequency of $\SI{200}{\hertz}$.

\subsection{Results}
Figure~\ref{fig:tracking} shows the tracking results over five stiffness cycles. We provide $ \q(t) $ under the baseline approaches without adaptation (top), $ \q(t) $ under the proposed adaptive controller (middle) and the normed error $ \sqrt{\int_{t_0}^{t_0 + 2T} \T s^T\T s \, dt}$ per cycle with cycle starting time $ t_0 $.

As the adaptive controller (red) has been initialized with $ \khat(t=0) = \T k_{high} $, its performance is comparable to the baseline approach using the high stiffness model (blue) during the first cycle. However, then the tracking of $ \q_d(t) $ improves. Starting from the third cycle, the adaptive controller achieves the least normed tracking error in both the cycles with high and low stiffness setting.

The evolution of the parameter estimates is depicted in Fig.~\ref{fig:params}. Both the cubic and the linear parameter estimates reduce during the \emph{high} stiffness setting of the first cycle. With the second cycle, the adaptation rate further decreases to account for the reduced stiffness. Notably, the cubic estimate reaches its lower threshold at almost zero during this cycle. Conversely, the linear parameter stabilizes at $ \hat{k}_1\approx 0.5 $, which is larger than the linear contribution to $\T{k}_{low}$ obtained from the datasheet.

\begin{figure}[tb]
	\vspace{1mm}
	\centering
	\includegraphics[width=0.46\textwidth]{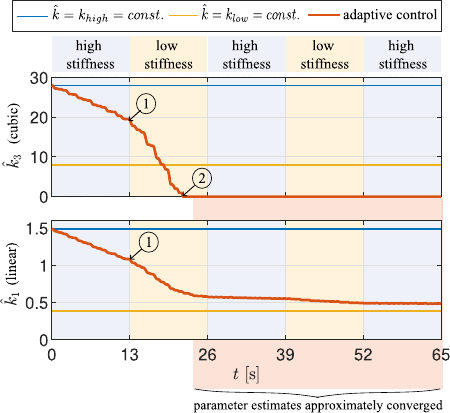}
	\caption{Evolution of the parameter estimates $ \hat{k}_3 $ and $ \hat{k}_1 $ during the adaptation as compared to the datasheet values. The absolute value of the adaptation rate clearly increases as the VSA's stiffness preset is reduced \textcircled{1}. The cubic parameter estimate runs into its lower limit \textcircled{2} whereas the linear parameter estimate converges to a value above the datasheet value for the low stiffness preset.}
	\label{fig:params}
\end{figure}

It may contradict the expectation that the parameter estimates do not increase in the third and fifth cycle, after the stiffness setting is switched from low to high. However, the adaptive controller does not guarantee convergence of the parameter estimate to its real value. It only adjusts the estimate, if there is a tracking error. Notably, the tracking performance remains consistent starting from the third cycle even without parameter adjustment. This may be due to the fact that the estimated, low, constant stiffness  can be understood as high gain on the desired link-side torque (cf. \eqref{eq:linear}). 
Note that this interpretation aligns with the observation that the \textit{baseline - low} controller (yellow) is more performant than the \textit{baseline~- high} one (blue).

\section{Conclusion and Outlook}\label{sec:conclusion}
This work addressed adaptive control of motor-position-controlled flexible-joint robots with uncertain, nonlinear joint stiffness. We designed an implicit control law and an adaptation law for the case of an ideal motor-position interface proving convergence of tracking errors to zero. Further, we analyzed robustness of the approach with respect to motor-side control errors. 

Experimental results confirm that the proposed approach can enhance the tracking performance of a motor-position-controlled flexible joint with an uncertain torque-deflection relation. The approach was able to react to the stiffness changes introduced via a variable stiffness mechanism and outperformed controllers utilizing a constant torque-deflection model. This motivates applying the approach in robotic joints, whose torque-deflection characteristics are expected to strongly depend on operating conditions. This could, e.g., be promising for systems involving low-cost elastic elements such as the springs fabricated from PLA introduced in \cite{Hoppner2025a}.

Future work could develop adaptation laws that ensure physically meaningful estimates of the stiffness characteristics. To improve parameter estimation, it seems also promising to utilize information from the torque prediction error as in \cite{Slotine1989}. 

Finally, extensions to further classes of robots with joint flexibility, such as antagonistically driven VSA or twisted string actuators, seem promising.

\bibliographystyle{IEEEtran}
\bibliography{AdaptiveControl}

@inproceedings{Catalano2011,
  title = {{{VSA-CubeBot}}: {{A}} Modular Variable Stiffness Platform for Multiple Degrees of Freedom Robots},
  shorttitle = {{{VSA-CubeBot}}},
  booktitle = {2011 {{IEEE International Conference}} on {{Robotics}} and {{Automation}}},
  author = {Catalano, Manuel G. and Grioli, Giorgio and Garabini, Manolo and Bonomo, Fabio and Mancini, Michele and Tsagarakis, Nikolaos and Bicchi, Antonio},
  year = 2011,
  month = may,
  pages = {5090--5095},
  issn = {1050-4729},
  doi = {10.1109/ICRA.2011.5980457},
  urldate = {2026-02-05}
}

@inproceedings{Chen2020,
  title = {Adaptive {{Neural Trajectory Tracking Control}} for {{Flexible-Joint Robots}} with {{Online Learning}}},
  booktitle = {2020 {{IEEE International Conference}} on {{Robotics}} and {{Automation}} ({{ICRA}})},
  author = {Chen, Shuyang and Wen, John T.},
  year = 2020,
  month = may,
  pages = {2358--2364},
  issn = {2577-087X},
  doi = {10.1109/ICRA40945.2020.9197051},
  urldate = {2026-02-09}
}

@article{DellaSantina2017,
  title = {Controlling {{Soft Robots}}: {{Balancing Feedback}} and {{Feedforward Elements}}},
  shorttitle = {Controlling {{Soft Robots}}},
  author = {Della Santina, Cosimo and Bianchi, Matteo and Grioli, Giorgio and Angelini, Franco and Catalano, Manuel and Garabini, Manolo and Bicchi, Antonio},
  year = 2017,
  month = sep,
  journal = {IEEE Robotics \& Automation Magazine},
  volume = {24},
  number = {3},
  pages = {75--83},
  issn = {1558-223X},
  doi = {10.1109/MRA.2016.2636360},
  urldate = {2026-02-05}
}

@incollection{DeLuca2008,
  title = {Robots with {{Flexible Elements}}},
  booktitle = {Springer {{Handbook}} of {{Robotics}}},
  author = {De Luca, Alessandro and Book, Wayne},
  editor = {Siciliano, Bruno and Khatib, Oussama},
  year = 2008,
  pages = {287--319},
  publisher = {Springer Berlin Heidelberg},
  address = {Berlin, Heidelberg},
  doi = {10.1007/978-3-540-30301-5_14},
  urldate = {2026-03-02},
  isbn = {978-3-540-23957-4 978-3-540-30301-5},
  langid = {english}
}

@inproceedings{Ghorbel1989,
  title = {Adaptive Control of Flexible Joint Manipulators},
  booktitle = {1989 {{International Conference}} on {{Robotics}} and {{Automation Proceedings}}},
  author = {Ghorbel, F. and Hung, J.Y. and Spong, M.W.},
  year = 1989,
  month = may,
  pages = {1188-1193 vol.2},
  doi = {10.1109/ROBOT.1989.100141},
  urldate = {2026-02-09}
}

@article{Hoppner2025a,
  title = {Variable {{Stiffness Actuation}} via {{3D-Printed Nonlinear Torsional Springs}}},
  author = {H{\"o}ppner, Hannes and Kirner, Annika and G{\"o}ttlich, Joshua and Jakob, Linn{\'e}a and Dietrich, Alexander and Ott, Christian},
  year = 2025,
  month = may,
  journal = {IEEE Robotics and Automation Letters},
  volume = {10},
  number = {5},
  pages = {4324--4331},
  issn = {2377-3766},
  doi = {10.1109/LRA.2025.3549658},
  urldate = {2026-02-05}
}

@book{Khalil2002,
  title = {Nonlinear Systems},
  author = {Khalil, Hassan K.},
  year = 2002,
  edition = {3. ed},
  publisher = {Prentice Hall},
  address = {Upper Saddle River, NJ},
  isbn = {978-0-13-067389-3},
  langid = {english}
}

@inproceedings{Lee2018,
  title = {A {{Natural Adaptive Control Law}} for {{Robot Manipulators}}},
  booktitle = {2018 {{IEEE}}/{{RSJ International Conference}} on {{Intelligent Robots}} and {{Systems}} ({{IROS}})},
  author = {Lee, Taeyoon and Kwon, Jaewoon and Park, Frank C.},
  year = 2018,
  month = oct,
  pages = {1--9},
  publisher = {IEEE},
  address = {Madrid},
  doi = {10.1109/IROS.2018.8593727},
  urldate = {2026-02-15},
  isbn = {978-1-5386-8094-0}
}

@article{Lentini2019,
  title = {Alter-{{Ego}}: {{A Mobile Robot With}} a {{Functionally Anthropomorphic Upper Body Designed}} for {{Physical Interaction}}},
  shorttitle = {Alter-{{Ego}}},
  author = {Lentini, Gianluca and Settimi, Alessandro and Caporale, Danilo and Garabini, Manolo and Grioli, Giorgio and Pallottino, Lucia and Catalano, Manuel G. and Bicchi, Antonio},
  year = 2019,
  month = dec,
  journal = {IEEE Robotics \& Automation Magazine},
  volume = {26},
  number = {4},
  pages = {94--107},
  issn = {1558-223X},
  doi = {10.1109/MRA.2019.2943846},
  urldate = {2026-02-05}
}

@inproceedings{Lozano-Leal1991,
  title = {Adaptive {{Control}} of {{Robot Manipulators}} with {{Flexible Joints}}},
  booktitle = {1991 {{American Control Conference}}},
  author = {{Lozano-Leal}, Rogelio and Brogliato, Bernard},
  year = 1991,
  month = jun,
  pages = {938--943},
  doi = {10.23919/ACC.1991.4791516},
  urldate = {2026-02-05}
}

@article{Moyron2025,
  title = {Elastic {{Structure Preserving Control}} for {{Flexible Joint Robots With Position-Controlled Actuators}}},
  author = {Moyr{\'o}n, Jer{\'o}nimo and Ott, Christian and Kirner, Annika and {Moreno-Valenzuela}, Javier},
  year = 2025,
  month = sep,
  journal = {IEEE Transactions on Control Systems Technology},
  volume = {33},
  number = {5},
  pages = {1810--1819},
  issn = {1558-0865},
  doi = {10.1109/TCST.2025.3562024},
  urldate = {2026-02-05}
}

@inproceedings{Pierallini2020,
  title = {Trajectory {{Tracking}} of a {{One-Link Flexible Arm}} via {{Iterative Learning Control}}},
  booktitle = {2020 {{IEEE}}/{{RSJ International Conference}} on {{Intelligent Robots}} and {{Systems}} ({{IROS}})},
  author = {Pierallini, Michele and Angelini, Franco and Mengacci, Riccardo and Palleschi, Alessandro and Bicchi, Antonio and Garabini, Manolo},
  year = 2020,
  month = oct,
  pages = {7579--7586},
  issn = {2153-0866},
  doi = {10.1109/IROS45743.2020.9341215},
  urldate = {2026-02-05}
}

@article{Slotine1987,
  title = {On the {{Adaptive Control}} of {{Robot Manipulators}}},
  author = {Slotine, Jean-Jacques E. and Li, Weiping},
  year = 1987,
  month = sep,
  journal = {The International Journal of Robotics Research},
  volume = {6},
  number = {3},
  pages = {49--59},
  issn = {0278-3649, 1741-3176},
  doi = {10.1177/027836498700600303},
  urldate = {2025-04-02},
  copyright = {https://journals.sagepub.com/page/policies/text-and-data-mining-license},
  langid = {english}
}

@article{Slotine1989,
  title = {Composite Adaptive Control of Robot Manipulators},
  author = {Slotine, Jean-Jacques E. and Li, Weiping},
  year = 1989,
  month = jul,
  journal = {Automatica},
  volume = {25},
  number = {4},
  pages = {509--519},
  issn = {00051098},
  doi = {10.1016/0005-1098(89)90094-0},
  urldate = {2025-04-02},
  copyright = {https://www.elsevier.com/tdm/userlicense/1.0/},
  langid = {english}
}

@article{Spong1987,
  title = {Modeling and {{Control}} of {{Elastic Joint Robots}}},
  author = {Spong, M. W.},
  year = 1987,
  month = dec,
  journal = {Journal of Dynamic Systems, Measurement, and Control},
  volume = {109},
  number = {4},
  pages = {310--318},
  issn = {0022-0434, 1528-9028},
  doi = {10.1115/1.3143860},
  urldate = {2026-03-03},
  langid = {english}
}

@article{Spong1989,
  title = {Adaptive Control of Flexible Joint Manipulators},
  author = {Spong, Mark W.},
  year = 1989,
  month = jul,
  journal = {Systems \& Control Letters},
  volume = {13},
  number = {1},
  pages = {15--21},
  issn = {0167-6911},
  doi = {10.1016/0167-6911(89)90016-9},
  urldate = {2026-02-09}
}

@inproceedings{Tian1995,
  title = {Robust Adaptive Control of Flexible Joint Robots with Joint Torque Feedback},
  booktitle = {Proceedings of 1995 {{IEEE International Conference}} on {{Robotics}} and {{Automation}}},
  author = {Tian, Lin and Goldenberg, A.A.},
  year = 1995,
  month = may,
  volume = {1},
  pages = {1229-1234 vol.1},
  issn = {1050-4729},
  doi = {10.1109/ROBOT.1995.525448},
  urldate = {2026-02-05}
}

@article{Zambella2025a,
  title = {Composite {{Whole-Body Control}} of {{Two-Wheeled Robots}}},
  author = {Zambella, Grazia and Caporale, Danilo and Grioli, Giorgio and Pallottino, Lucia and Bicchi, Antonio},
  year = 2025,
  journal = {IEEE Transactions on Robotics},
  volume = {41},
  pages = {2301--2321},
  issn = {1941-0468},
  doi = {10.1109/TRO.2025.3548494},
  urldate = {2026-03-02}
}

\end{document}